\newcommand{\ifcomments}{\iftrue}
  \providecommand\BibTeX{{%
    \normalfont B\kern-0.5em{\scshape i\kern-0.25em b}\kern-0.8em\TeX}}}
\begin{document}
\fancyhead{}

\title{RayS: A Ray Searching Method for Hard-label Adversarial Attack}

\author{Jinghui Chen}
\affiliation{%
  \institution{University of California, Los Angeles}
}
\email{jhchen@cs.ucla.edu}

\author{Quanquan Gu}
\affiliation{%
  \institution{University of California, Los Angeles}
}
\email{qgu@cs.ucla.edu}

\renewcommand{\shortauthors}{Anonymous Authors}

\begin{abstract}
Deep neural networks are vulnerable to adversarial attacks. Among different attack settings, the most challenging yet the most practical one is the hard-label setting where the attacker only has access to the hard-label output (prediction label) of the target model. Previous attempts are neither effective enough in terms of attack success rate nor efficient enough in terms of query complexity under the widely used $L_\infty$ norm threat model. 
In this paper, we present the Ray Searching attack (RayS), which greatly improves the hard-label attack effectiveness as well as efficiency. Unlike previous works, we reformulate the continuous problem of finding the closest decision boundary into a discrete problem that does not require any zeroth-order gradient estimation.
In the meantime, all unnecessary searches are eliminated via a fast check step. This significantly reduces the number of queries needed for our hard-label attack.
Moreover, interestingly, we found that the proposed RayS attack can also be used as a sanity check for possible ``falsely robust'' models. On several recently proposed defenses that claim to achieve the state-of-the-art robust accuracy, our attack method demonstrates that the current white-box/black-box attacks could still give a false sense of security and the robust accuracy drop between the most popular PGD attack and RayS attack could be as large as $28\%$. We believe that our proposed RayS attack could help identify falsely robust models that beat most white-box/black-box attacks.

\end{abstract}

\begin{CCSXML}
<ccs2012>
   <concept>
       <concept_id>10010147.10010178.10010205.10010207</concept_id>
       <concept_desc>Computing methodologies~Discrete space search</concept_desc>
       <concept_significance>500</concept_significance>
       </concept>
   <concept>
       <concept_id>10010147.10010178.10010224.10010245.10010251</concept_id>
       <concept_desc>Computing methodologies~Object recognition</concept_desc>
       <concept_significance>500</concept_significance>
       </concept>
 </ccs2012>
\end{CCSXML}

\ccsdesc[500]{Computing methodologies~Discrete space search}
\ccsdesc[500]{Computing methodologies~Object recognition}
 
\keywords{robustness, deep neural networks, hard-label attacks}

\maketitle

\vspace{-0.2cm}
\section{Introduction}
Deep neural networks (DNNs) have achieved remarkable success on many machine learning tasks such as computer vision \citep{he2016deep, sutskever2012imagenet}, and speech recognition \citep{hinton2012deep} in the last decade. Despite the great success, recent studies have shown that DNNs are vulnerable to adversarial examples, i.e., even imperceptible (specially designed not random) perturbations could cause the state-of-the-art classifiers to make wrong predictions \citep{szegedy2013intriguing,goodfellow6572explaining}.  
This intriguing phenomenon has soon led to an arms race between adversarial attacks \citep{carlini2017towards,athalye2018obfuscated,chen2018frank}
that are trying to break the DNN models with such small perturbations and adversarial defenses methods \citep{papernot2016distillation, madry2017towards, wang2019convergence, zhang2019theoretically,Wang2020Improving} that tries to defend against existing attacks.
During this arm race, many heuristic defenses \citep{papernot2016distillation,guo2017countering,xie2017mitigating,song2017pixeldefend,ma2018characterizing,samangouei2018defense,dhillon2018stochastic} are later proved to be not effective under harder attacks. 
One exception is adversarial training \citep{goodfellow6572explaining,madry2017towards}, which was demonstrated as an effective defense approach.
 
A large body of adversarial attacks has been proposed during this arm race. According to the different amounts of information the attacker could access, adversarial attacks can be generally divided into three categories: white-box attacks, black-box attacks, and hard-label attacks.
White-box attacks \citep{madry2017towards, carlini2017towards} refer to the case where the attacker has access to all information regarding the target model, including the model weights, structures, parameters, and possible defense mechanisms. Since white-box attackers could access all model details, it can efficiently perform back-propagation on the target model and compute gradients.  
In black-box attacks, the attacker only has access to the queried soft label output (logits or probability distribution of different classes) of the target model, and the other parts are treated as a black-box. The black-box setting is much more practical compared with the white-box case, however, in such a setting, the attacker cannot perform back-propagation and direct gradient computation. Therefore, many turn to transfer the gradient from a known model \citep{papernot2016transferability} or estimate the true gradient via zeroth-order optimization methods \citep{ilyas2018black, ilyas2018prior, chen2018frank, AlDujaili2020Sign}.

Hard-label attacks, also known as decision-based attacks, on the other hand, only allow the attacker to query the target model and get hard-label output (prediction label). Obviously, the hard-label setting is the most challenging one, yet it is also the most practical one, as in reality, there is little chance that the attacker could know all the information about the target model in advance or get the probability prediction of all classes. 
The hard-label-only access also means that the attacker cannot tell the subtle changes in the target model's output when feeding a slightly perturbed input sample (assuming this slight perturbation will not change the model prediction).
Therefore, the attacker can only find informative clues around the decision boundary of the target model where tiny perturbations could cause the model to have different prediction labels. 
Previous works \cite{brendel2018decisionbased,cheng2018queryefficient,cheng2020signopt,chen2019hopskipjumpattack} mostly follow this idea to tackle the hard-label adversarial attack problem. 
However, \cite{brendel2018decisionbased,cheng2018queryefficient,cheng2020signopt,chen2019hopskipjumpattack}  are all originally proposed for $L_2$ norm threat model while $L_\infty$ norm threat models \citep{madry2017towards,zhang2019theoretically,kim2020sensible,zhang2019defense,zhang2020adversarial} are currently the most popular and widely used. Even though \cite{cheng2018queryefficient,cheng2020signopt,chen2019hopskipjumpattack} provide extensions to $L_\infty$ norm case, none of them has been optimized for the $L_\infty$ norm case and consequently, their attack performance falls largely behind traditional $L_\infty$ norm based white-box and black-box attacks, making them inapplicable in real world scenarios.
This leads to a natural question that,

\vspace{0.2cm}
\textit{Can we design a hard-label attack that could greatly improve upon previous hard-label attacks and provide practical attacks for the most widely used $L_\infty$ norm threat model?}
\vspace{0.2cm}

In this paper, we answer this question affirmatively.
We summarize our main contributions as follows 
\begin{itemize} 
    \item We propose the Ray Searching attack, which only relies on the hard-label output of the target model. We show that the proposed hard-label attack is much more effective and efficient than previous hard-label attacks in the $L_\infty$ norm threat model.
    
    \item Unlike previous works, most of which solve the hard-label attack problem via zeroth-order optimization methods, we reformulate the continuous optimization problem of finding the closest decision boundary into a discrete one and directly search for the closest decision boundary along a discrete set of ray directions. 
    A fast check step is also utilized to skip unnecessary searches.
    This significantly saves the number of queries needed for the hard-label attack. Our proposed attack is also free of hyperparameter tuning such as step size or finite difference constant, making itself very stable and easy to apply.
    
    \item Moreover, our proposed RayS attack can also be used as a strong attack to detect possible ``falsely robust'' models. By evaluating several recently proposed defenses that claim to achieve the state-of-the-art robust accuracy with RayS attack, we show that the current white-box/black-box attacks can be deceived and give a false sense of security.  Specifically, the RayS attack significantly decrease the robust accuracy of the most popular PGD attack on several robust models and the difference could be as large as $28\%$. We believe that our proposed RayS attack could help identify falsely robust models that deceive current white-box/black-box attacks.

\end{itemize}
 \vspace{-0.1cm}
The remainder of this paper is organized as follows: in Section \ref{sec:related}, we briefly review existing literature on adversarial attacks. We present our proposed Ray Searching attack (RayS) in Section \ref{sec:methods}. In Section \ref{sec:exp}, we show the proposed RayS attack is more efficient than other hard-label attacks and can be used as a sanity check for detecting falsely robust models by evaluating several recently proposed defenses. Finally, we conclude this paper and provide discussions in Section \ref{sec:conclusion}. 
 
 
\textbf{Notation.} For a $d$-dimensional vector $\xb = [x_1,...,x_d]^{\top}$, we use $\|\xb\|_{0}=\sum_i \mathbbm{1}\{x_i \neq 0\}$ to denote its $\ell_0$-norm, use $\|\xb\|_{2} = (\sum_{i=1}^{d}|x_{i}|^{2})^{1/2}$ to denote its $\ell_2$-norm and use $\|\xb\|_{\infty}=\max_i|x_i|$ to denote its $\ell_\infty$-norm, where $\mathbbm{1}(\cdot)$ denotes the indicator function.

\vspace{-0.2cm}
\section{Related Work}\label{sec:related}
There is a large body of works on evaluating model robustness and generating adversarial examples. In this section, we review the most relevant works with ours. 

\textbf{White-box attacks:}
\citet{szegedy2013intriguing} first brought up the concept of adversarial examples and adopt the L-BFGS algorithm for attacks. \citet{goodfellow6572explaining} proposed the Fast Gradient Sign Method (FGSM) method via linearizing the network loss function. \citet{kurakin2016adversarial} proposed to iteratively perform FGSM and conduct projection afterward, which is equivalent to Projected Gradient Descent (PGD) \citep{madry2017towards}. \citet{papernot2016limitations} proposed JSMA method based on the Jacobian saliency map and \citet{moosavi2016deepfool} proposed DeepFool attack by projecting the data to the closest separating hyper-plane.
\citet{carlini2017towards} introduced the CW attack with a margin-based loss function and show that defensive distillation \citep{papernot2016distillation} is not truly robust. \citet{chen2018frank} proposed a projection-free attack based on the Frank-Wolfe method with momentum. \citet{athalye2018obfuscated} identified the effect of obfuscated gradients and proposed the BPDA attack for breaking those obfuscated gradient defenses.

\textbf{Black-box attacks:}
Other than the aforementioned white-box attack algorithms, there also exists a large body of literature \citep{hu2017generating,papernot2016transferability,papernot2017practical,chen2017zoo,ilyas2018black,ilyas2018prior,li2019nattack,chen2018frank} focusing on the black-box attack case where the information is limited to the logits output of the model rather than every detail of the model.
Transfer-based black-box attacks \citep{hu2017generating,papernot2016transferability,papernot2017practical} try to transfer the gradient from a known model to the black-box target model and then apply the same technique as in the white-box case. However, their attack effectiveness is often not quite satisfactory. Optimization-based black-box attacks aim to estimate the true gradient via zeroth-order optimization methods. \citet{chen2017zoo} proposed to estimate the gradient via finite-difference on each dimension. \citet{ilyas2018black} proposed to improve the query efficiency of \cite{chen2017zoo} via Natural Evolutionary Strategies. \citet{ilyas2018prior} further improved upon \citet{ilyas2018black} by exploiting gradient priors.
\citet{uesato2018adversarial} proposed to use the SPSA method to build a gradient-free attack that can break vanishing gradient defenses.
\citet{AlDujaili2020Sign} proposed to directly estimate the sign of the gradient instead of the true gradient itself.
\citet{moon2019parsimonious} reformulated the continuous optimization problem into a discrete one and proposed a combinatorial search based algorithm to make the attack more efficient. \citet{andriushchenko2019square} proposed a randomized search scheme to iteratively patch small squares onto the test example.

\textbf{Hard-label attacks:}  
\citet{brendel2018decisionbased} first studied the hard-label attack problem and proposed to solve it via random walks near the decision boundary.
\citet{ilyas2018black} demonstrated a way to transform the hard-label attack problem into a soft label attack problem.
\citet{cheng2018queryefficient} turned the adversarial optimization problem into the problem of finding the optimal direction that leads to the shortest $L_2$ distance to decision boundary and optimized the new problem via zeroth-order optimization methods. \citet{cheng2020signopt} further improved the query complexity of \citep{cheng2018queryefficient} by estimating the sign of gradient instead of the true gradient. \citet{chen2019hopskipjumpattack} also applied zeroth-order sign oracle to improve \citep{brendel2018decisionbased} by searching the step size and keeping the iterates along the decision boundary.

\vspace{-0.2cm}
\section{The Proposed Method}\label{sec:methods}
In this section, we introduce our proposed \textit{Ray Searching attack} (RayS). Before we go into details about our proposed method, we first take an overview of the previous adversarial attack problem formulations.

\subsection{Overview of Previous Problem Formulations}
We denote the DNN model by $f$ and the test data example as $\{\xb, y\}$. The goal of adversarial attack is to solve the following optimization problem
\begin{align}\label{eq:adv_attack}
    \min_{\xb'} \mathbbm{1}\{f(\xb') = y\} \ \text{ s.t., } \ \|\xb' - \xb\|_\infty \leq \epsilon,
\end{align}
where $\epsilon$ denotes the maximum allowed perturbation strength.
The indicator function $\mathbbm{1}\{f(\xb') = y\}$ is hard to optimize, therefore, \cite{madry2017towards,zhang2019theoretically,chen2018frank,ilyas2018black,ilyas2018prior,AlDujaili2020Sign} turn to relax \eqref{eq:adv_attack} into
\begin{align}\label{eq:adv_attack_ce}
    \max_{\xb'} \ell(f(\xb'), y) \ \text{ s.t., } \ \|\xb' - \xb\|_\infty \leq \epsilon,
\end{align}
where $\ell$ denotes the surrogate loss function such as CrossEntropy loss. 
On the other hand, traditional hard-label attacks \citep{cheng2018queryefficient,cheng2020signopt} re-formulate \eqref{eq:adv_attack} as
\begin{align}\label{eq:adv_hard_label}
    \min_{\db} g(\db) \ \text{ where } \ g(\db) = \argmin_r \mathbbm{1}\{f(\xb + r \db / \|\db\|_2) \neq y\}.
\end{align}
Here $g(\db)$ represents the decision boundary radius from original example $\xb$ along ray direction $\db$ and the goal is to find the minimum decision boundary radius regarding the original example $\xb$. Let $(\hat{r}, \hat{\db})$ denotes the minimum decision boundary radius and the corresponding ray direction. If the minimum decision boundary radius satisfies $\|\hat{r}   \hat{\db} / \|\hat{\db}\|_2\|_\infty \leq \epsilon$, it will be counted as a successful attack.

While prior works \citep{cheng2018queryefficient,cheng2020signopt} try to solve problem \eqref{eq:adv_hard_label} in a continuous fashion by estimating the gradient of $g(\db)$ via zeroth-order optimization methods, the hard-label-only access restriction imposes great challenges in solving \eqref{eq:adv_hard_label}. Specifically, estimating the the decision boundary radius $g(\db)$ typically takes a binary search procedure and estimating an informative gradient of $g(\db)$ via finite difference requires multiple rounds of $g(\db)$ computation. Furthermore, due to the large variance in zeroth-order gradient estimating procedure, optimizing \eqref{eq:adv_hard_label} typically takes a large number of gradient steps. These together, make solving \eqref{eq:adv_hard_label} much less efficient and effective than black-box attacks, not to mention white-box attacks.

Given all the problems mentioned above, we turn to directly search for the closest decision boundary without estimating any gradients. 

\subsection{Ray Search Directions}
With a finite number of queries, it is impossible to search through the whole continuous ray direction space. As a consequence, we need to restrict the search space to a discrete set of ray directions to make direct searches possible. Note that applying FGSM to \eqref{eq:adv_attack_ce} leads to an optimal solution at the vertex of the $L_\infty$ norm ball \citep{moon2019parsimonious,chen2018frank}, suggesting that those vertices might provide possible solutions to \eqref{eq:adv_attack_ce}. Empirical findings in \citep{moon2019parsimonious} also suggest that the solution to \eqref{eq:adv_attack_ce} obtained from the PGD attack is mostly found on the vertices of $L_\infty$ norm ball. Inspired by this, \citet{moon2019parsimonious} restrict the feasible solution set as the vertex of the $L_\infty$ norm ball. Following this idea, since our goal is to obtain the decision boundary radius, we consider the ray directions that point to the $L_\infty$ norm ball  vertices, i.e., $\db \in \{-1,1\}^d$ where $d$ denotes the dimension of original data example $\xb$\footnote{Without loss of generality, here we view $\xb$ simply as a $d$-dimensional vector.}. 
Therefore, instead of solving \eqref{eq:adv_hard_label}, we turn to solve a discrete problem 
 \begin{align}\label{eq:adv_hard_label_discrete}
    \min_{\db \in \{-1, 1\}^d} g(\db) \ \text{ where } \ g(\db) = \argmin_r \mathbbm{1}\{f(\xb + r \db / \|\db\|_2) \neq y\}.
\end{align}
In problem \eqref{eq:adv_hard_label_discrete}, we reduce the search space from $\RR^d$ to $\{-1, 1\}^d$, which contains $2^d$ possible search directions.

Now we begin to introduce our proposed Ray Searching attack.
We first present the naive version of the Ray Searching attack, which is summarized in Algorithm \ref{alg:rays_naive}.
Specifically, given a model $f$ and a test example $\{\xb, y\}$, we first initialize the best search direction as an all-one vector and set the initial best radius as infinity. Then we iteratively change the sign of each dimension of the current best ray direction and test whether this modified ray direction leads to a better decision boundary radius by Algorithm \ref{alg:bin_search} (will be described later). If it does, we update the best search direction and the best radius, otherwise, they remain unchanged. Algorithm \ref{alg:rays_naive} is a greedy search algorithm that finds the local optima of the decision boundary radius, where the local optima of the decision boundary radius are defined as follows.

\begin{definition}[Local Optima of Decision Boundary Radius]
A ray direction $\db \in \{-1,1\}^d$ is the local optima of the decision boundary radius regarding \eqref{eq:adv_hard_label_discrete}, if for all $\db' \in \{-1,1\}^d$ satisfy $\|\db' - \db\|_0 \leq 1$, we have $g(\db) \leq g(\db')$.
\end{definition}

\begin{theorem}
Given enough query budgets, let $(\hat{r}, \hat{\db})$ be the output of Algorithm \ref{alg:rays_naive}, then $\hat{\db}$ is the local optima of decision boundary radius problem \eqref{eq:adv_hard_label_discrete}.
\end{theorem}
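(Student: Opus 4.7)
The plan is to interpret ``enough query budgets'' as allowing the greedy coordinate sweep in Algorithm~\ref{alg:rays_naive} to continue until a complete pass over all $d$ coordinates produces no update to $(\hat{r}, \hat{\db})$. Under this reading, local optimality essentially falls out of the stopping criterion, so the proof reduces to (i) establishing monotonicity of $g(\hat{\db})$ along the trajectory and (ii) arguing that a terminating pass certifies every single-coordinate neighbor.

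First I would formalize the monotonicity invariant: at any point in the execution, $(\hat{r}, \hat{\db})$ is only overwritten when the candidate direction strictly decreases the current decision boundary radius (as returned by the binary search subroutine Algorithm~\ref{alg:bin_search}, which is assumed to resolve $g(\cdot)$ correctly given enough queries). Consequently $g(\hat{\db})$ is non-increasing throughout the run, and the sequence of values attained by $\hat{\db}$ is well-defined.

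Next I would consider the terminal iterate. Given sufficient query budget, the algorithm performs a full sweep over coordinates $i = 1,\dots,d$ in which no flip of $\hat{\db}_i$ improves upon $g(\hat{\db})$; otherwise it would keep iterating. Fix this terminal $\hat{\db}$ and let $\db'$ be any neighbor with $\|\db' - \hat{\db}\|_0 \leq 1$. If $\db' = \hat{\db}$ then $g(\hat{\db}) \leq g(\db')$ trivially; otherwise $\db'$ differs from $\hat{\db}$ in exactly one coordinate $i$, and this exact flip was tested in the terminating sweep. Because the sweep produced no update, we have $g(\hat{\db}) \leq g(\db')$, matching the definition of a local optimum.

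The main obstacle I expect is the interaction between the inner binary search and the outer greedy loop: the correctness of each comparison relies on $g(\cdot)$ being evaluated exactly, which only holds in the limit of unbounded queries to Algorithm~\ref{alg:bin_search}. I would handle this by explicitly invoking the ``enough query budget'' hypothesis to assert that every binary search used in the terminating sweep returns the true $g$-value (or at least a value accurate enough that the comparison outcome agrees with the true one). A secondary subtlety is making sure the terminating sweep is tested against the \emph{current} $\hat{\db}$ rather than against an earlier intermediate iterate; I would address this by specifying the convergence criterion as ``one full pass over $\{1,\dots,d\}$ with no update,'' which is standard for greedy coordinate search and aligns with the way Algorithm~\ref{alg:rays_naive} is stated.
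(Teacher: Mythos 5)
Your proposal is correct and is essentially the paper's argument in contrapositive form: the paper proves by contradiction that any improving single-coordinate neighbor would have been found during the sweep over all dimensions, which is exactly your direct claim that a terminating pass with no update certifies every neighbor. Your additional care about the binary search resolving $g(\cdot)$ exactly and about testing neighbors against the \emph{current} iterate is a welcome tightening, but it does not change the route.
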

\begin{proof}
We prove this by contradiction. Suppose $\hat{\db}$ is not the local optima, there must exist some $\db'$ satisfying $\|\db' - \hat{\db}\|_0 \leq 1$, i.e., $\db'$ differs from $\hat{\db}$ by at most $1$ dimension, that $g(\hat{\db}) > g(\db')$. This means Algorithm \ref{alg:rays_naive} can still find better solution than $g(\hat{\db})$ by going through all dimensions and thus $\hat{\db}$ will not be the output of Algorithm \ref{alg:rays_naive}. This leads to a contradiction.
\end{proof}

Next we introduce Algorithm \ref{alg:bin_search}, which performs decision boundary radius search. The main body of Algorithm \ref{alg:bin_search} (from Line \ref{line:bin_start} to Line \ref{line:bin_end}) is a binary search algorithm to locate the decision boundary radius with high precision. The steps before Line \ref{line:bin_start}, on the other hand, focus on deciding the search range and whether we need to search it (this is the key to achieve efficient attacks).
Specifically, we first normalize the search direction by its $L_2$ norm. And then in Line \ref{line:fast_check}, we do a fast check at $x + r_{\text{best}} \cdot \db_n$\footnote{For applications such as image classification, there is an additional clipping to $[0,1]$ operation to keep the image valid. We assume this is included in model $f$ and do not write it explicitly in Algorithm \ref{alg:bin_search}.} and decide whether we need to further perform a binary search for this direction. To help better understand the underlying mechanism, Figure \ref{fig:fast_check} provides a two-dimensional sketch for the fast check step in Line \ref{line:fast_check} in Algorithm \ref{alg:bin_search}. Suppose we first change the sign of the current $\db_{\text{best}}$ at dimension $1$, resulting a modified direction $\db_{\text{tmp1}}$. The fast check shows that it is a valid attack and it has the potential to further reduce the decision boundary radius. On the other hand, if we change the sign of  $\db_{\text{best}}$ at dimension $2$, resulting a modified direction $\db_{\text{tmp2}}$. The fast check shows that it is no longer a valid attack and the decision boundary radius of direction $\db_{\text{tmp2}}$ can only be worse than the current $r_{\text{best}}$. Therefore, we skip all unnecessary queries that aim to estimate a worse decision boundary radius. Note that in \citet{cheng2020signopt}, a similar check was also presented for slightly perturbed directions. However, they use it as the sign for gradient estimation while we simply drop all unsatisfied radius based on the check result and obtain better efficiency.
Finally, we explain Line \ref{line:d_end} in Algorithm \ref{alg:bin_search}. The choice of $\min(r_{\text{best}}, \|\db\|_2)$ is because initial $r_{\text{best}}$ is $\infty$, in the case where the fast check passes, we should make sure the binary search range is finite.

\begin{figure}[h]
  \centering
  \includegraphics[width=0.75\linewidth]{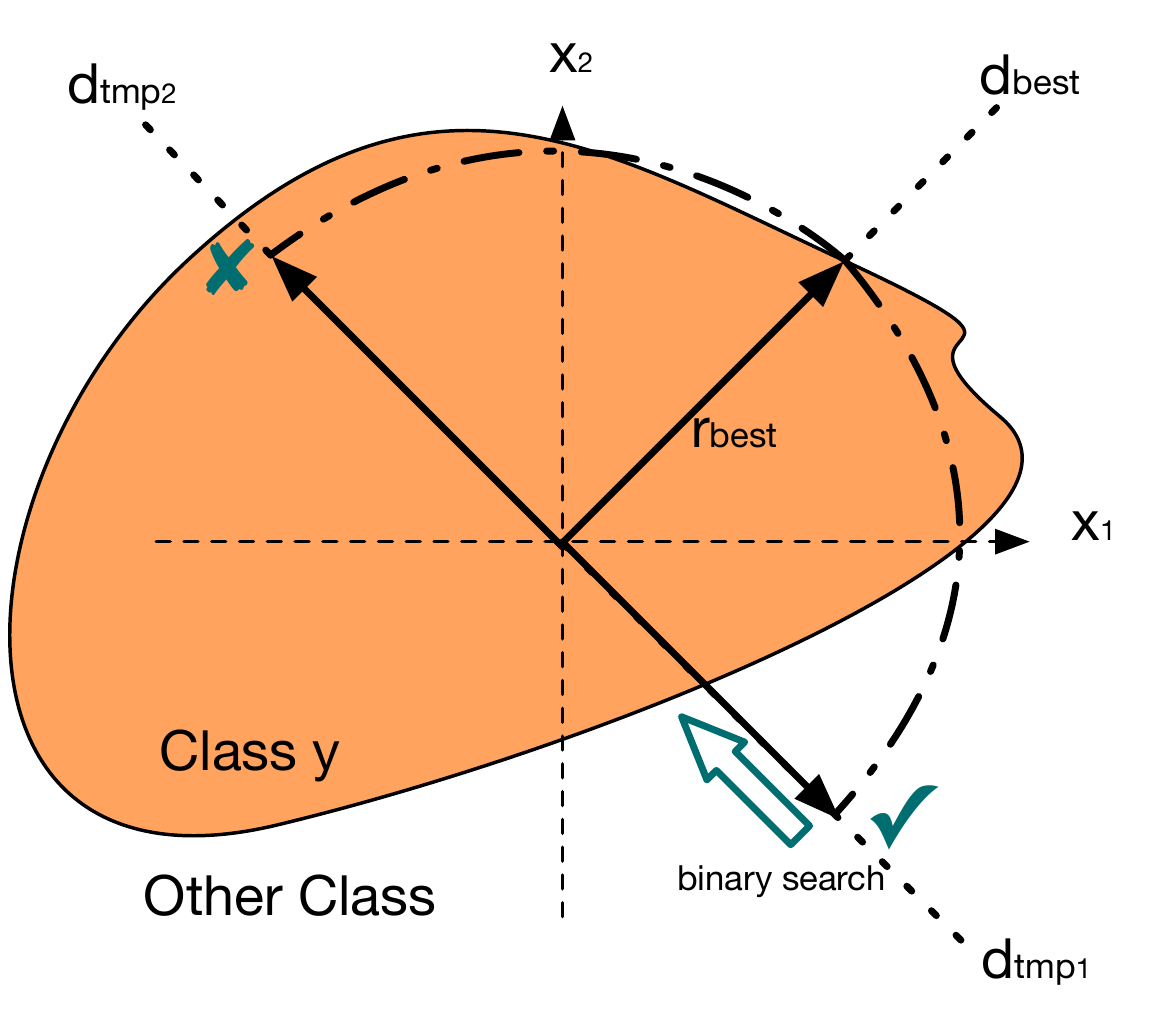}
  \caption{A two-dimensional sketch for the fast check step in Algorithm \ref{alg:bin_search}.}
  \label{fig:fast_check}
\end{figure}

\begin{algorithm}[t]
	\caption{Ray Searching Attack (Naive)}
	\label{alg:rays_naive}
	\begin{algorithmic}[1]
		\STATE \textbf{input:} Model $f$, Original data example $\{\xb, y\}$; 
		\STATE Initialize current best search direction $\db_{\text{best}} = (1, \ldots, 1)$
		\STATE Initialize current best radius $r_{\text{best}} = \infty$
		\STATE Initialize ray searching index $k = 1$
 		\WHILE {remaining query budget $ > 0$}
		\STATE $\db_{\text{tmp}} =   \db_{\text{best}}.copy()$
		\STATE $\db_{\text{tmp}}[k] =   - \db_{\text{tmp}}[k]$
		\STATE $r_{\text{tmp}} = \text{DBR-Search}(f, \xb, y, \db_{\text{tmp}}, r_{\text{best}})  $
		\IF {$r_{\text{tmp}} < r_{\text{best}}$}
        \STATE $r_{\text{best}}, \db_{\text{best}} = r_{\text{tmp}}, \db_{\text{tmp}} $
        \ENDIF
        \STATE $k = k + 1$
        \IF {$k == d$}
        \STATE $k = 1$
        \ENDIF
		\ENDWHILE
		\STATE return $r_{\text{best}}, \db_{\text{best}}$
 	\end{algorithmic}
\end{algorithm}

\begin{algorithm}[t]
	\caption{Decision Boundary Radius Search (DBR-Search)}
	\label{alg:bin_search}
	\begin{algorithmic}[1]
		\STATE \textbf{input:} Model $f$, Original data example $\{\xb, y\}$, Search direction $\db$, Current best radius $r_{\text{best}}$, Binary search tolerance $\epsilon$; 
		\STATE Normalized search direction $\db_n = \db / \|\db\|_2$
		\IF{$f(\xb + r_{\text{best}} \cdot \db_n) == y$}\label{line:fast_check}
		\STATE return $\infty$
		\ENDIF
		
		\STATE Set $start = 0, end = \min(r_{\text{best}}, \|\db\|_2)$ \label{line:d_end}
 		\WHILE { $end - start > \epsilon$} \label{line:bin_start}
 		\STATE $mid = (start + end) / 2$
 		\IF{$f(x + mid \cdot \db_n) == y$}
 		\item {$end = mid$}
 		\ELSE
 		\item {$start = mid$}
 		\ENDIF
		\ENDWHILE\label{line:bin_end}
		\STATE return $end$
 	\end{algorithmic}
\end{algorithm}

\begin{algorithm}[t]
	\caption{Ray Searching Attack (Hierarchical)}
	\label{alg:rays}
	\begin{algorithmic}[1]
		\STATE \textbf{input:} Model $f$, Original data example $\{\xb, y\}$; 
		\STATE Initialize current best search direction $\db_{\text{best}} = (1, \ldots, 1)$
		\STATE Initialize current best radius $r_{\text{best}} = \infty$
		\STATE Initialize stage $s = 0$
		\STATE Initialize block index $k = 1$
		
 		\WHILE {remaining query budget $ > 0$}
		\STATE $\db_{\text{tmp}} =   \db_{\text{best}}.copy()$
		\STATE Cut $\db_{\text{tmp}}$ into $2^s$ blocks and denote index set in the $k$-th block by $\cI_k$
		\STATE $\db_{\text{tmp}}[\cI_k] =   - \db_{\text{tmp}}[\cI_k]$
		\STATE $r_{\text{tmp}} = \text{DBR-Search}(f, \xb, y, \db_{\text{tmp}}, r_{\text{best}})  $
        \IF {$r_{\text{tmp}} < r_{\text{best}}$}
        \STATE $r_{\text{best}}, \db_{\text{best}} = r_{\text{tmp}}, \db_{\text{tmp}} $
        \ENDIF
        \STATE $k = k + 1$
        \IF {$k == 2^s$}
        \STATE $s = s + 1$
        \STATE $k = 1$
        \ENDIF
		\ENDWHILE
		\STATE return $r_{\text{best}}, \db_{\text{best}}$
 	\end{algorithmic}
\end{algorithm}

\subsection{Hierarchical Search}
Recent works on black-box attacks \citep{ilyas2018prior, moon2019parsimonious} found that there exists some spatial correlation between different dimensions of the gradients, and exploiting this prior could help improve the efficiency of black-box attacks. Therefore, they added the same perturbation for small tiles or image blocks on the original data example to achieve better efficiency.
Inspired by this finding, we also exploit these spatial correlations by designing a hierarchical search version of the Ray Searching attack, displayed in Algorithm \ref{alg:rays}. Specifically, we add a new stage variable $s$. At each stage, we cut the current search direction into $2^s$ small blocks, and for each iteration, change the sign of the entire block simultaneously as the modified ray search direction for decision boundary radius search. After iterating through all blocks we move to the next stage and repeat the search process. 
Empirically speaking, Algorithm \ref{alg:rays} largely improves the search efficiency by exploiting the spatial correlation mentioned above. All our experiments in Section \ref{sec:exp} are conducted using Algorithm \ref{alg:rays}.
Note that if the query budget is large enough, Algorithm \ref{alg:rays} will, in the end, get to the case where the block size\footnote{For completeness, when $2^s$ is larger than data dimension $d$, Algorithm \ref{alg:rays} will only partition the search direction vector $\db_{\text{tmp}}$ into $d$ blocks to ensure each block contain at least one dimension.} equals to $1$ and reduce to Algorithm \ref{alg:rays_naive} eventually.

Note that all three algorithms (Algorithms \ref{alg:rays_naive}, \ref{alg:bin_search} and \ref{alg:rays}) do not involve any hyperparameters aside from the maximum number of queries, which is usually a predefined problem-related parameter. In sharp contrast, typical white-box attacks and zeroth-order optimization-based black-box attacks, need to tune quite a few hyperparameters in order to achieve good attack performance.
 
\vspace{-0.2cm}
\section{Experiments}\label{sec:exp}
In this section, we present the experimental results of our proposed Ray Searching attack (RayS). We first test RayS attack with other hard-label attack baselines on naturally trained models and then apply RayS attack on recently proposed state-of-the-art robust training models to test their performances. 
All of our experiments are conducted with NVIDIA 2080 Ti GPUs using Pytorch 1.3.1 on Python 3.6.9 platform. 

\vspace{-0.2cm}
\subsection{Datasets and Target Models}
We compare the performance of all attack algorithms on MNIST \citep{lecun2010mnist}, CIFAR-10 \citep{krizhevsky2009learning} and ImageNet \citep{deng2009imagenet} datasets.
Following adversarial examples literature \citep{ilyas2018black, moon2019parsimonious, AlDujaili2020Sign}, we set $\epsilon = 0.3$ for MNIST dataset, $\epsilon = 0.031$ for CIFAR-10 dataset and $\epsilon = 0.05$ for ImageNet dataset.
For naturally trained models, on the MNIST dataset, we attack two pre-trained 7-layer CNN: 4 convolutional layers followed by 3 fully connected layers with Max-pooling and RelU activation applied after each convolutional layer. 
The MNIST pre-trained model achieves $99.5\%$ accuracy on the test set.
On the CIFAR-10 dataset, we also use a 7-layer CNN structure with 4 convolutional layers and an additional 3 fully connected layers accompanied by Batchnorm and Max-pooling layers. The CIFAR-10 pre-trained model achieves $82.5\%$ accuracy on the test set. 
For ImageNet experiments, we attack pre-trained ResNet-50 model \citep{he2016identity} and Inception V3 model  \citep{szegedy2016rethinking}.
The pre-trained ResNet-50 model is reported to have a $76.2\%$ top-$1$ accuracy.
The pre-trained Inception V3 model is reported to have a $78.0\%$ top-$1$ accuracy.  
For robust training models, we evaluate two well-recognized defenses: Adversarial Training (AdvTraining) \citep{madry2017towards} and TRADES \citep{zhang2019theoretically}. In addition, we also test three other recently proposed defenses which claim to achieve the state-of-the-art robust accuracy: Sensible Adversarial Training (SENSE) \citep{kim2020sensible}, Feature Scattering-based Adversarial Training (FeatureScattering) \citep{zhang2019defense},
Adversarial Interpolation Training (AdvInterpTraining) \citep{zhang2020adversarial}.
Specifically, adversarial training \citep{madry2017towards} solves a min-max optimization problem to minimize the adversarial loss.
\citet{zhang2019theoretically} studied the trade-off between robustness and accuracy in adversarial training and proposed an empirically more robust model.
\citet{kim2020sensible} proposed to stop the attack generation when a valid attack has been found. \citet{zhang2019defense} proposed an unsupervised feature-scattering scheme for attack generation.  \citet{zhang2020adversarial} proposed an adversarial interpolation scheme for generating adversarial examples as well as adversarial labels and trained on those example-label pairs.

\vspace{-0.2cm}
\subsection{Baseline Methods}
We compare the proposed algorithm with several state-of-the-art attack algorithms. Specifically, for attacking naturally trained models, we compare the proposed RayS attack with other hard-label attack baselines
(i) OPT attack \citep{cheng2018queryefficient}, 
(ii) SignOPT attack \citep{cheng2020signopt},
and (iii) HSJA attack \citep{chen2019hopskipjumpattack}.
We adopt the same hyperparameter settings in the original papers of OPT, SignOPT, and HSJA attack.

For attacking robust training models, we additionally compare with other state-of-the-art black-box attacks and even white-box attacks:
(i) PGD attack \citep{madry2017towards} (white-box),
(ii) CW attack \citep{carlini2017towards} \footnote{To be precise, here CW attack refers to PGD updates with CW loss \citep{carlini2017towards}} (white-box),
(iii) SignHunter \citep{AlDujaili2020Sign} (black-box),
and (iv) Square attack \citep{andriushchenko2019square} (black-box).
For PGD attack and CW attack, we set step size as $0.007$ and provide attack results for $20$ steps and also $100$ steps. For SignHunter and Square attack, we adopt the same hyperparameter settings used in their original papers.

\subsection{Comparison with hard-label Attack Baselines on Naturally Trained Models}
In this subsection, we compare our Ray Searching attack with other hard-label attack baselines on naturally trained models. 
For each dataset (MNIST, CIFAR-10, and ImageNet), we randomly choose $1000$ images from its test set that are verified to be correctly classified by the pre-trained model and test how many of them can be successfully attacked by the hard-label attacks. For each method, we restrict the maximum number of queries as $10000$. For the sake of query efficiency, we stop the attack for certain test sample once it is successfully attacked, i.e., the $L_\infty$ norm distance between adversarial examples and original examples is less than the pre-defined perturbation limit $\epsilon$.
Tables \ref{tab:untargeted_mnist},   \ref{tab:untargeted_cifar},   \ref{tab:untargeted_resnet} and  \ref{tab:untargeted_inception} present the performance comparison of all hard-label attacks on MNIST model, CIFAR-10 model, ResNet-50 Model and Inception V3 model respectively. For each experiment, we report the average and median of the number of queries needed for successful attacks for each attack, as well as the final attack success rate, i.e., the ratio of successful attacks against the total number of attack attempts.
Specifically, on the MNIST dataset, we observe that our proposed RayS attack enjoys much better query efficiency in terms of average and median of the number of queries, and much higher attack success rate than OPT and SignOPT methods. Note that the average (median) number of queries of SignOPT is larger than that of OPT. However, this does not mean that SignOPT performs worse than OPT. This result is due to the fact that the attack success rate of OPT is very low and its average (median) queries number is calculated based on the successfully attacked examples, which in this case, are the most vulnerable examples. 
HSJA attack, though improving over SignOPT\footnote{Note that the relatively weak performance of SignOPT is due to the fact that SignOPT is designed for $L_2$ norm attack while this experiment is under the $L_\infty$ norm setting. So the result does not conflict with the result reported in the original paper of SignOPT \cite{cheng2020signopt}.}, still falls behind our RayS attack. For the CIFAR model, the RayS attack still achieves the highest attack success rate. Though the HSJA attack comes close to the RayS attack in terms of attack success rate, its query efficiency still falls behind. On ResNet-50 and Inception V3 models, only RayS attack maintains the high attack success rate while the other baselines largely fall behind. Note that HSJA attack achieves similar or even slightly better average (median) queries on ImageNet models, suggesting that HSJA is efficient for the most vulnerable examples but not very effective when dealing with hard-to-attack examples.
Figure \ref{fig:asr} shows the attack success rate against the number of queries plot for all baseline methods on different models. Again we can see that the RayS attack overall achieves the highest attack success rate and best query efficiency compared with other hard-label attack baselines.

\begin{table}[t]
  \caption{Comparison of $L_\infty$ norm based hard-label attack on MNIST dataset ($\epsilon = 0.3$).}
  \label{tab:untargeted_mnist}
  \begin{tabular}{lccc}
    \toprule
    Methods & Avg. Queries & Med. Queries & ASR (\%)  \\
    \midrule
    OPT  & 3260.9 & 2617.0 & 20.9 \\
    SignOPT & 3784.3 & 3187.5 & 62.8  \\
    HSJA   & 161.6 & 154.0 & 91.2  \\
    RayS & \textbf{107.0} & \textbf{47.0} & \textbf{100.0}   \\
    \hline
    PGD (white-box) &  - &  - & 100.0 \\
  \bottomrule
\end{tabular}
\end{table}

\begin{table}[t]
  \caption{Comparison of $L_\infty$ norm based hard-label attack on CIFAR-10 dataset ($\epsilon = 0.031$). }
  \label{tab:untargeted_cifar}
  \begin{tabular}{lccc}
    \toprule
    Methods & Avg. Queries & Med. Queries & ASR (\%)  \\
    \midrule
    OPT & 2253.3 & 1531.0 & 31.0\\
    SignOPT & 2601.3 & 1649.0 & 60.1 \\
    HSJA & 1021.6 & 714.0 & 99.7 \\
    RayS & \textbf{792.8} & \textbf{343.5} & \textbf{99.8} \\
    \hline
    PGD (white-box) &  - & - & 100.0  \\
  \bottomrule
\end{tabular}
\end{table}

\begin{table}[t]
  \caption{Comparison of $L_\infty$ norm based hard-label attack on ImageNet dataset for ResNet-50 model ($\epsilon = 0.05$).}
  \label{tab:untargeted_resnet}
  \begin{tabular}{lccc}
    \toprule
    Methods & Avg. Queries & Med. Queries & ASR (\%)  \\
    \midrule
    OPT & 1344.5 & 655.5 & 14.2\\
    SignOPT & 3103.5 & 2434.0 & 36.0 \\
    HSJA & 749.6 & \textbf{183.0} & 19.9  \\
    RayS & \textbf{574.0} & {296.0} & \textbf{99.8} \\
    \hline
    PGD (white-box) &  - & - & 100.0   \\
  \bottomrule
\end{tabular}
\end{table}

\begin{table}[t]
  \caption{Comparison of $L_\infty$ norm based hard-label attack on ImageNet dataset for Inception V3 model ($\epsilon = 0.05$).}
  \label{tab:untargeted_inception}
  \begin{tabular}{lccc}
    \toprule
    Methods & Avg. Queries & Med. Queries & ASR (\%)  \\
    \midrule
    OPT & 2375.6 & 1674.0 & 21.9  \\
    SignOPT & 2624.8 & 1625.0 & 39.9 \\
    HSJA & \textbf{652.3} & \textbf{362.0} & 23.7 \\
    RayS & 748.2 & 370.0 & \textbf{98.9} \\
    \hline
    PGD (white-box) &  - & - & 100.0 \\
  \bottomrule
\end{tabular}
\end{table}

\begin{figure*}[h]
  \centering
  \subfigure[MNIST]{\includegraphics[width=0.24\linewidth]{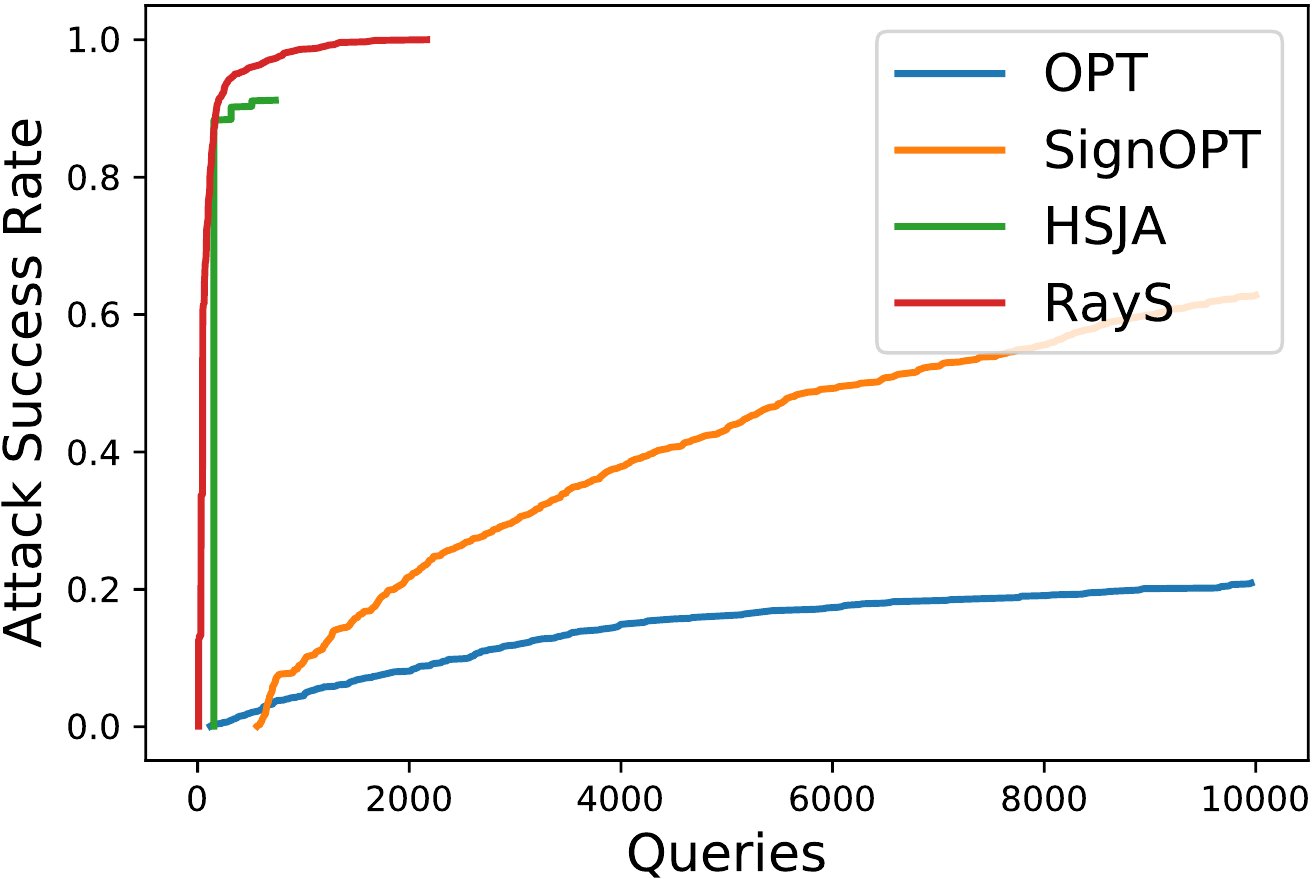}}
  \subfigure[CIFAR]{\includegraphics[width=0.24\linewidth]{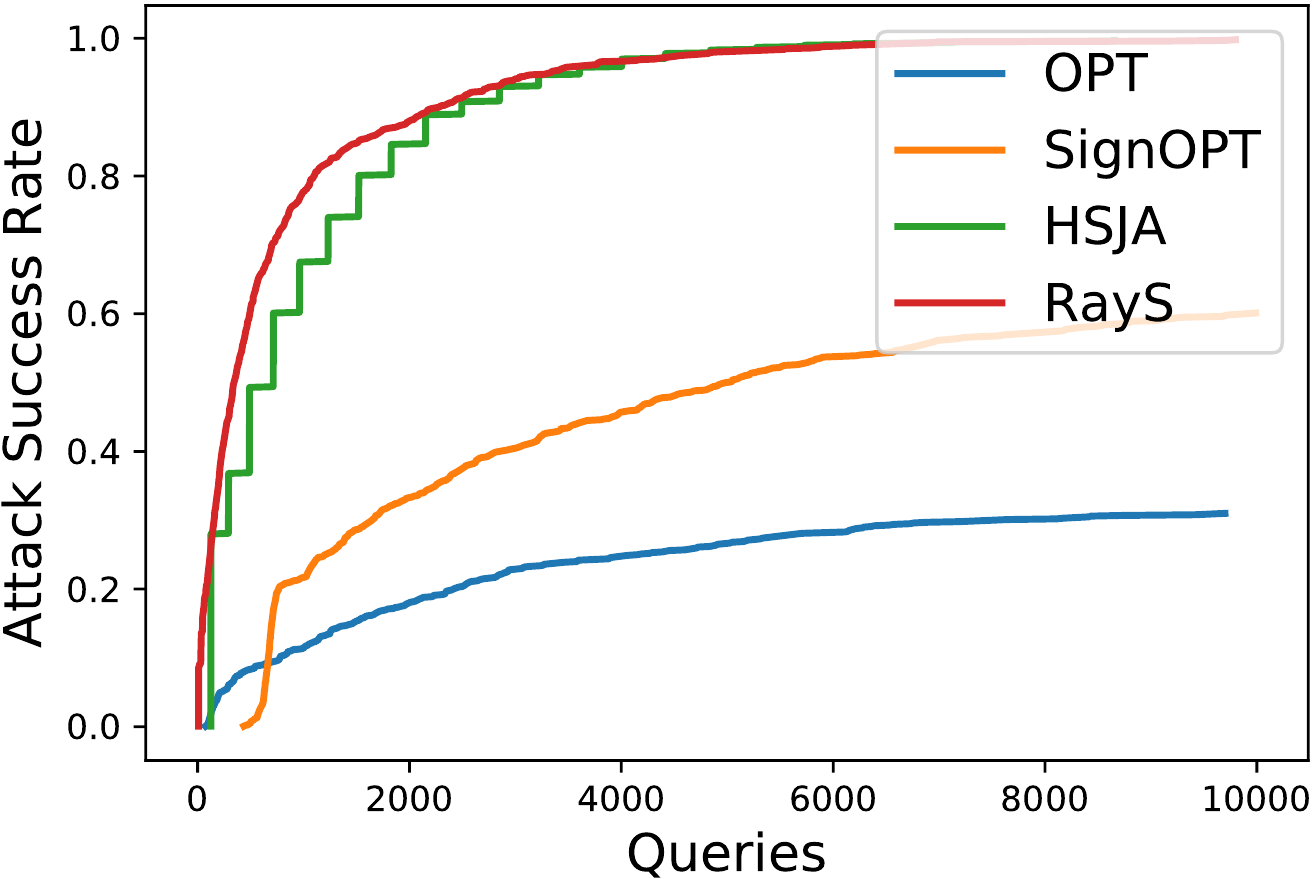}}
  \subfigure[ResNet-50]{\includegraphics[width=0.24\linewidth]{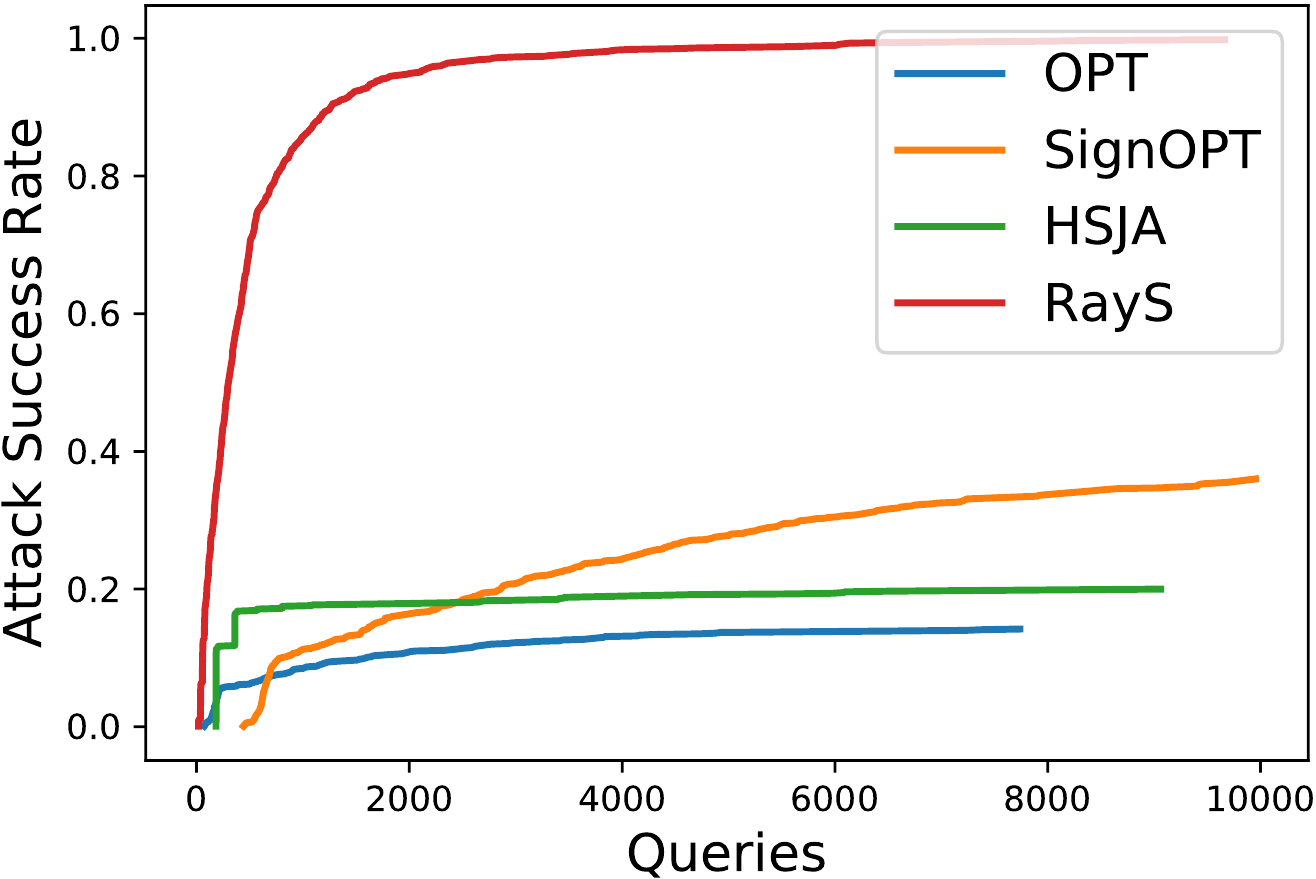}}
  \subfigure[Inception V3]{\includegraphics[width=0.24\linewidth]{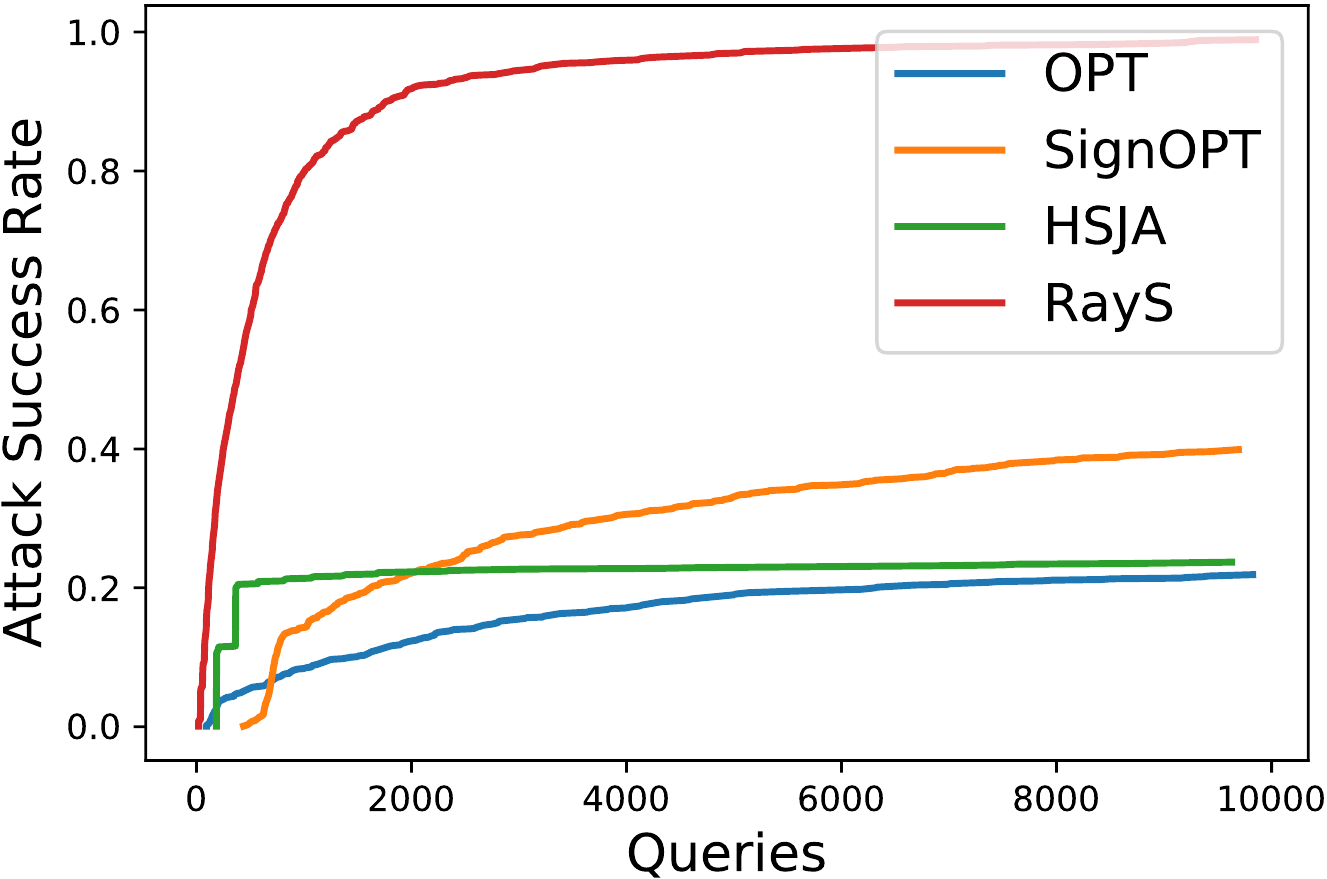}}
  \caption{Attack success rate against the number of queries plots for different hard-label attacks on MNIST, CIFAR-10 and ImageNet datasets. }
  \label{fig:asr}
\end{figure*}

\vspace{-0.2cm}
\subsection{Evaluating the Robustness of State-of-the-art Robust Models}
In this subsection, we further test our proposed Ray Searching attack by applying it to the state-of-the-art robust training models. 
Specifically, we selected five recently proposed open-sourced defenses on CIFAR-10 dataset and WideResNet \citep{zagoruyko2016wide} architecture. For the test examples, we randomly choose $1000$ images from the CIFAR-10 test set. We set the maximum number of queries as $40000$.

In terms of evaluation metrics, following the literature of robust training \citep{madry2017towards,zhang2019theoretically}, we report the natural accuracy and robust accuracy (classification accuracy under adversarial attacks) of the defense model. 
In addition, we report a new metric called \textit{Average Decision Boundary Distance} (ADBD), which is defined as the average $L_\infty$ norm distance between \textit{all} test examples to their nearest decision boundaries. Note that ADBD is not valid for white-box and black-box attacks that follow formulation \eqref{eq:adv_attack}, since they cannot find the nearest decision boundaries for all test examples.

Here we want to emphasize the difference between ADBD and the average $L_\infty$ distortion in the adversarial learning literature. Note that $L_\infty$ distortion\footnote{For all white-box and black-box attacks tested in this experiment, their $L_\infty$ distortions are very close to $0.031$, which is the perturbation limit $\epsilon$. Therefore, we do not report the $L_\infty$ distortion in the tables as it does not provide much additional information.} usually refers to the $L_\infty$ norm distance between successful adversarial attack examples and their corresponding original clean examples and therefore, is affected by the choice the maximum perturbation limit $\epsilon$.
For hard-label attacks, only considering the attacks with a radius less than $\epsilon$ loses too much information and cannot capture the whole picture of model robustness\footnote{ For hard-label attacks, the ADBD value is always larger than the $L_\infty$ distortion.}.
On the other hand, the ADBD metric, though only valid for hard-label attacks, provides a meaningful estimation on the average distance from the original clean examples to their decision boundaries.

Tables \ref{tab:rob_madry}, \ref{tab:rob_trades}, \ref{tab:rob_sense}, \ref{tab:rob_fs} and \ref{tab:rob_advinterp} show the comparison of different adversarial attack methods on five selected robust models. Specifically, for two well recognized robust training models, Adversarial Training (in Table \ref{tab:rob_madry}) and TRADES (in Table \ref{tab:rob_trades}), we observe that white-box attacks are still the strongest attacks, where PGD attack and CW attack achieve very similar attack performances. For black-box attacks, the SignHunter attack and Square attack achieve similar attack performances as their white-box counterparts. In terms of hard-label attacks, our proposed RayS attack also achieves comparable attack performance as black-box or even white-box attacks given the most restricted access to the target model. When comparing with other hard-label attack baselines, it can be seen that our RayS attack achieves significant performance improvement in terms of both robust accuracy (over $20\%$) and the average decision boundary distance (reduced by $30\%$). The less effectiveness in attacking $L_\infty$ norm threat model makes the SignOPT attack and HSJA attack less practical.
For Sensible Adversarial Training model (in Table \ref{tab:rob_sense}), it indeed achieves overall better robust accuracy under white-box attacks, compared with Adversarial Training and TRADES. For black-box attacks, the SignHunter attack achieves similar performance as PGD attack and Square attack achieves similar performance as CW attacks. 
Interestingly, we observe that for hard-label attacks, our proposed RayS attack achieves $42.5\%$ robust accuracy, reducing $20\%$ from PGD attack and $15\%$ from CW attack, suggesting that the robustness of Sensible Adversarial Training is not truly better than TRADES and Adversarial Training, but just looks better under PGD attack and CW attack.
For Feature Scattering-based Adversarial Training model (in Table \ref{tab:rob_fs}), note that the CW attack is much more effective than the PGD attack. Also for black-box attacks, the performance of the Square attack is much better than SignHunter attack\footnote{Square attack is based on CW loss while SignHunter attack is based on CrossEntropy loss.}, suggesting that the CW loss is more effective than CrossEntropy loss in attacking Feature Scattering-based Adversarial Training model. Again, we can observe that our proposed RayS attack reduces the robust accuracy of PGD attack by $28\%$ and CW attack by $10\%$. This also suggests that Feature Scattering-based Adversarial Training model does not really provide better robustness than Adversarial Training or TRADES.
For Adversarial Interpolation Training model (in Table \ref{tab:rob_advinterp}), under white-box attacks, it achieves surprisingly high robust accuracy of $75.3\%$ (under PGD attack) and $68.9\%$ (under CW attack), and similar results can be obtained under the corresponding black-box attacks. However, it is still not truly robust under our RayS attack, reducing the robust accuracy of PGD attack by $28\%$ and CW attack by $22\%$. Note that in this experiment, the HSJA attack also achieves lower robust accuracy than PGD attack, suggesting that all hard-label attacks may have the potential to detect those falsely robust models that deceive current white-box/black-box attacks, but the low efficiency of HSJA restricts its power for greater use.
 
To obtain the overall comparison on the robustness of the five selected robust training models under our proposed RayS attack, we plot the Average Decision Boundary Distance (ADBD) against RayS attack iterations in Figure \ref{fig:adbd} and the robust accuracy against RayS attack iterations in Figure \ref{fig:robacc}. First, it can be seen that the Average Decision Boundary Distance and robust accuracy indeed converge and remain stable after around $10000$ RayS attack iterations. Figures \ref{fig:adbd} and \ref{fig:robacc} suggest that among the five selected robust training models, TRADES and Adversarial Training remain the most robust models while Sensible Adversarial Training, Feature Scattering-based Adversarial Training and Adversarial Interpolation Training, are not as robust as they appear under PGD attacked and CW attack. Note also that even though Sensible Adversarial Training, Feature Scattering-based Adversarial Training and Adversarial Interpolation Training have quite different robust accuracy results under RayS attack, their ADBD results are quite similar.

\setlength{\textfloatsep}{6pt}
\begin{table}[t]
  \caption{Comparison of different adversarial attack methods on Adversarial Training \citep{madry2017towards} for CIFAR-10 dataset (WideResNet, $\epsilon = 0.031$, natural accuracy: $87.4\%$).}
  \label{tab:rob_madry}
  \begin{tabular}{lccc}
    \toprule
    Methods  & Att. Type & ADBD  & Rob. Acc (\%)  \\
    \midrule
    SignOPT & hard-label & 0.202    & 85.1  \\
    HSJA    & hard-label & 0.060    & 76.8\\
    RayS & hard-label & \textbf{0.038}    & \textbf{54.0} \\
    \hline
    SignHunter & black-box & -    & \textbf{50.9}\\
    Square & black-box & -    & 52.7\\
    \hline
    PGD-20 & white-box & -    & 51.1  \\
    CW-20 &  white-box & -    & 51.8 \\
    PGD-100 & white-box & -    & \textbf{50.6}   \\
    CW-100  & white-box & -    & 51.5 \\
  \bottomrule
\end{tabular}
\end{table}


\begin{table}[t]
  \caption{Comparison of different adversarial attack methods on TRADES \citep{zhang2019theoretically} for CIFAR-10 dataset (WideResNet, $\epsilon = 0.031$, natural accuracy: $85.4\%$).}
  \label{tab:rob_trades}
  \begin{tabular}{lccc}
    \toprule
    Methods  & Att. Type & ADBD   & Rob. Acc (\%)  \\
    \midrule
    SignOPT & hard-label & 0.196    & 84.0\\
    HSJA  & hard-label & 0.064    & 71.6 \\
    RayS   & hard-label & \textbf{0.040}    & \textbf{57.3} \\
    \hline
    SignHunter & black-box &-   & \textbf{56.1}\\
    Square  & black-box &-   & \textbf{56.1}\\
    \hline
    PGD-20  & white-box &-   & 56.5 \\
    CW-20   & white-box &-   & 55.6  \\
    PGD-100 & white-box &-   & 56.3 \\
    CW-100  & white-box &-   & \textbf{55.3}  \\
  \bottomrule
\end{tabular}
\end{table}

\begin{table}[t]
  \caption{Comparison of different adversarial attack methods on SENSE \citep{kim2020sensible} for CIFAR-10 dataset (WideResNet, $\epsilon = 0.031$, natural accuracy: $91.9\%$).}
  \label{tab:rob_sense}
  \begin{tabular}{lccc}
    \toprule
    Methods  & Att. Type & ADBD   & Rob. Acc (\%)  \\
    \midrule
    SignOPT & hard-label & 0.170  & 88.2\\
    HSJA  & hard-label & 0.044   & 66.6\\
    RayS & hard-label & \textbf{0.029}   & \textbf{42.5}  \\
    \hline
    SignHunter  & black-box &-  & 61.9\\
    Square   & black-box &-  & \textbf{58.2}\\
    \hline
    PGD-20  & white-box &-  & 62.1 \\
    CW-20   & white-box &-  & 59.7 \\
    PGD-100 & white-box &-  & 60.1\\
    CW-100  & white-box &-  & \textbf{57.9} \\
  \bottomrule
\end{tabular}
\end{table}

 \begin{table}[t]
  \caption{Comparison of different adversarial attack methods on Feature-Scattering \citep{zhang2019defense} for CIFAR-10 dataset (WideResNet, $\epsilon = 0.031$, natural accuracy: $91.3\%$).}
  \label{tab:rob_fs}
  \begin{tabular}{lccc}
    \toprule
    Methods  & Att. Type & ADBD & Rob. Acc (\%)  \\
    \midrule
    SignOPT & hard-label & 0.175   & 87.1\\
    HSJA  & hard-label & 0.048    & 70.0\\
    RayS & hard-label & \textbf{0.030}    & \textbf{44.5}  \\
    \hline
    SignHunter & black-box &-   & 67.3\\
    Square  & black-box &-   & \textbf{55.3}\\
    \hline
    PGD-20  & white-box &-   & 72.8  \\
    CW-20   & white-box &-   & 57.2  \\
    PGD-100 & white-box &-   & 70.4  \\
    CW-100  & white-box &- & \textbf{54.8}  \\
  \bottomrule
\end{tabular}
\end{table}

\begin{table}[t]
  \caption{Comparison of different adversarial attack methods on Adversarial Interpolation Training \citep{zhang2020adversarial} for CIFAR-10 dataset (WideResNet, $\epsilon = 0.031$, natural accuracy:  $91.0\%$).}
  \label{tab:rob_advinterp}
  \begin{tabular}{lccc}
    \toprule
    Methods  & Att. Type & ADBD & Rob. Acc (\%)  \\
    \midrule
    SignOPT & hard-label & 0.169  & 84.2\\
    HSJA & hard-label & 0.049  & 70.5\\
    RayS & hard-label & \textbf{0.031}  & \textbf{46.9} \\
    \hline
    SignHunter & black-box &-  & 73.6\\
    Square  & black-box &-  & \textbf{69.0} \\
    \hline
    PGD-20  &  white-box &-  & 75.6  \\
    CW-20   &  white-box &-  &  69.2  \\
    PGD-100 &  white-box &-  & 75.3 \\
    CW-100  &  white-box &-  &  \textbf{68.9} \\
  \bottomrule
\end{tabular}
\end{table}

\begin{figure}[h]
  \centering
  \includegraphics[width=0.8\linewidth]{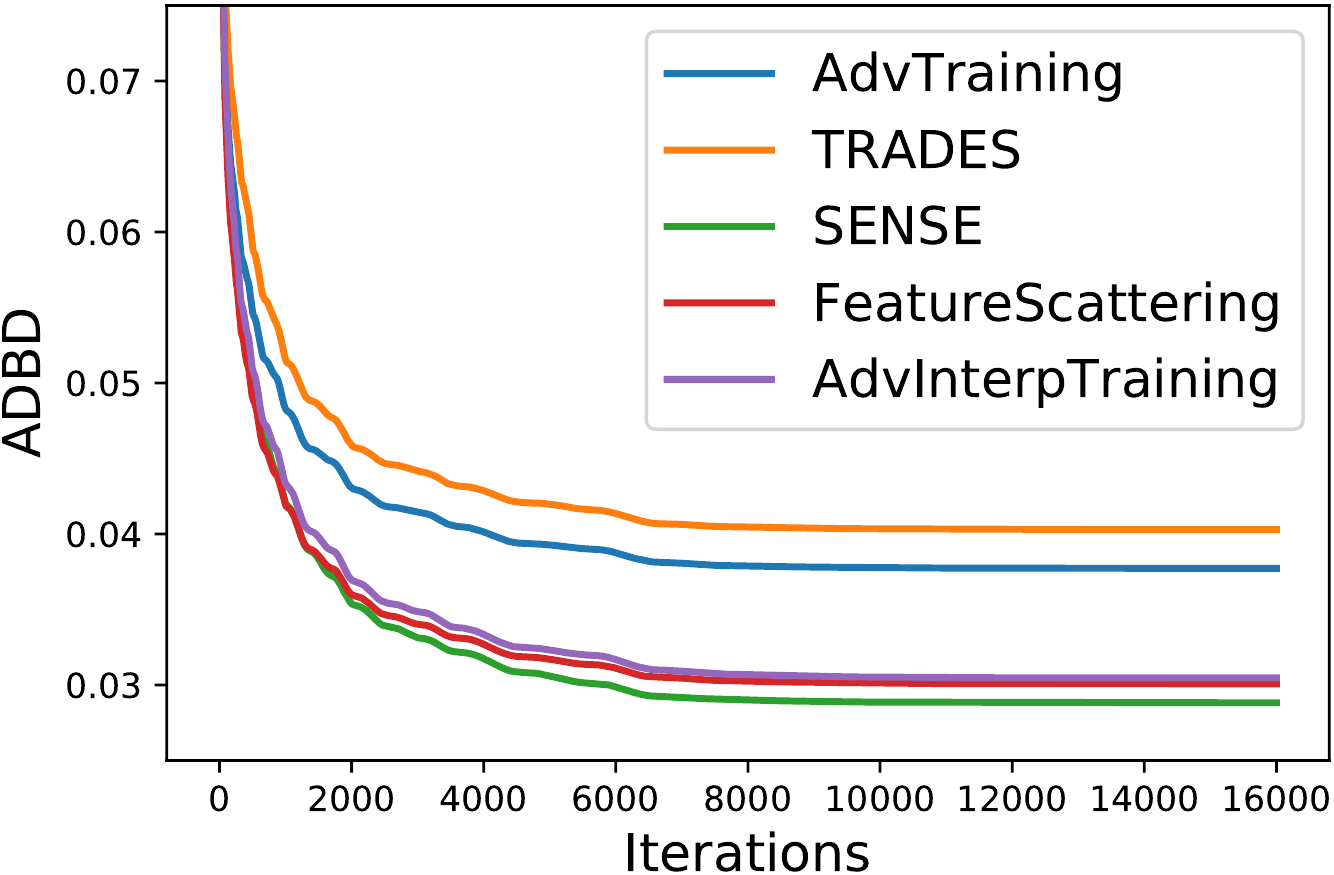}
  \caption{Average Decision Boundary Distance (ADBD) against RayS attack iterations plot for several robust models.}
  \label{fig:adbd}
  \vspace{-0.2cm}
\end{figure}

\begin{figure}[h]
  \centering
  \includegraphics[width=0.8\linewidth]{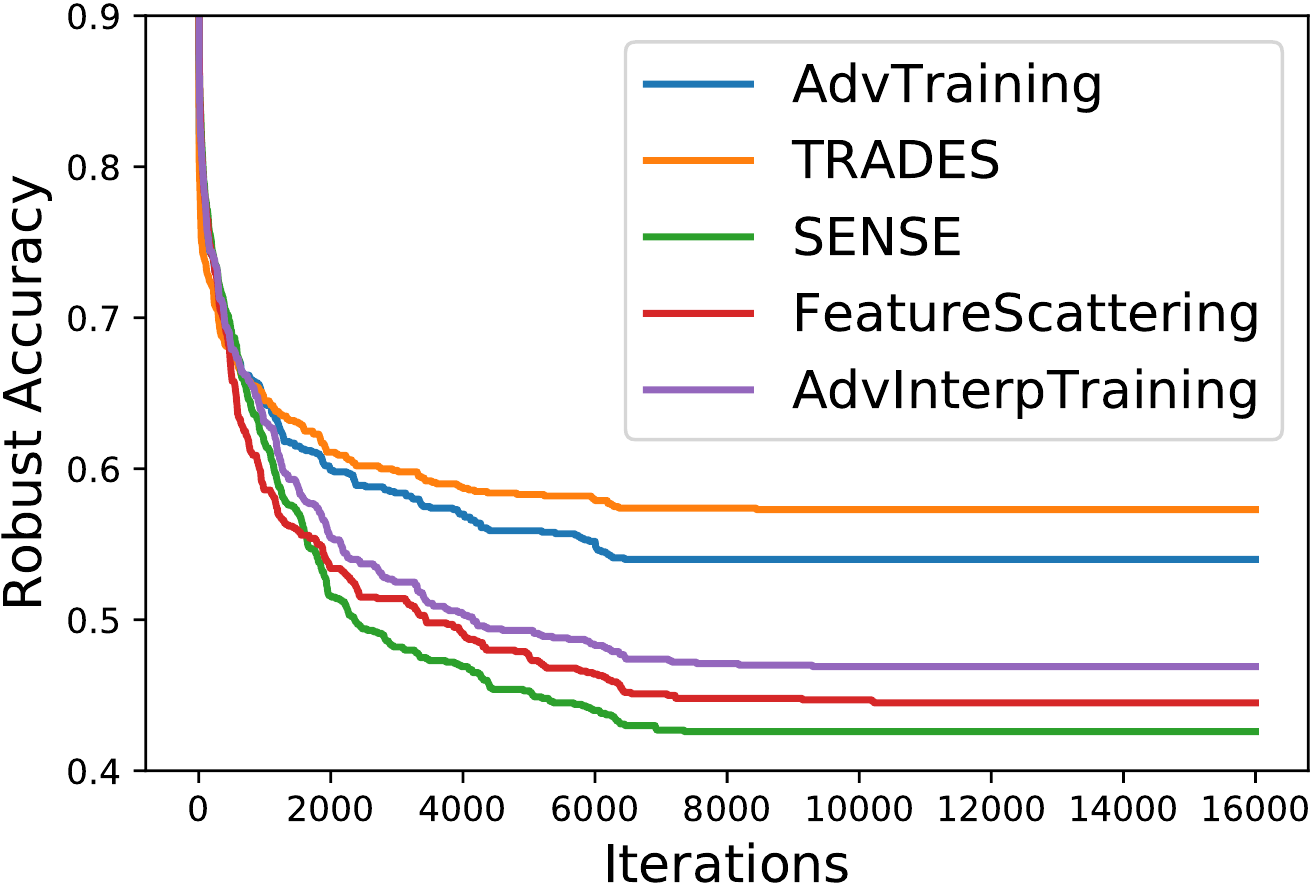}
  \caption{Robust accuracy against RayS attack iterations plot for several robust models.}
  \label{fig:robacc}
  \vspace{-0.2cm}
\end{figure}


\section{Discussions and Conclusions}\label{sec:conclusion}

In this paper, we proposed the Ray Searching attack, which only requires the hard-label output of the target model. The proposed Ray Searching attack is much more effective in attack success rate and efficient in terms of query complexity, compared with other hard-label attacks. Moreover, it can be used as a sanity check tool for possible ``falsely robust'' models that deceive current white-box and black-box attacks.

In the following discussions, we try to analyze the key ingredients for the success of the proposed Ray Searching attack.

\textit{Why RayS attack is more effective and efficient than the other hard-label baselines?}

As we mentioned before, traditional hard-label attacks are more focused on the $L_2$ norm threat model with only a few extensions to the $L_\infty$ norm threat model. 
While for our RayS attack, we reformulate the continuous problem of finding the closest decision boundary into a discrete problem based on empirical findings in $L_\infty$ norm threat model, which leads to a more effective hard-label attack. On the other hand, the strategy of directly searching for the closest decision boundary together with a fast check step eliminates unnecessary searches and significantly improves the attack efficiency. 
 
\textit{Why RayS attack can detect possible ``false'' robust models while traditional white-box and black-box attacks cannot?}

One thing we observe from Section \ref{sec:exp} is that although different attacks lead to different robust accuracy results, their attack performances are correlated with the choice of attack loss functions, e.g., both PGD attack and SignHunter attack utilize CrossEntropy loss and their attack performances are similar in most cases. A similar effect can also be seen for the CW attack and Square attack, both of which utilize the CW loss function. 
However, these loss functions were used as surrogate losses to problem \eqref{eq:adv_attack}, and they may not be able to truly reflect the quality/potential of an intermediate example (an example near the original clean example that is not yet a valid adversarial example).
For instance, consider the case where two intermediate examples share the same log probability at ground truth class $y$, but vary drastically on other classes.
Their CrossEntropy losses are the same in such cases, but one may have larger potential to develop into a valid adversarial example than the other one (e.g., the second-largest probability is close to the largest probability). Therefore, CrossEntropy loss does not really reflect the true quality/potential of the intermediate examples. Similar instances can also be constructed for CW loss. In sharp contrast, our RayS attack consider the decision boundary radius as the search criterion\footnote{Actually it is a criterion for all hard-label attack.}. When we compare two examples on the decision boundary, it is clear that the closer one is better. 
In cases where the attack problem is hard to solve and the attacker could easily get stuck at intermediate examples (e.g., attacking robust training models), it is easy to see that the RayS attack stands a better chance of finding a successful attack.
This partially explains the superiority of RayS attack in detecting ``falsely robust'' models.

\begin{acks}
We thank the anonymous reviewers and senior PC for their helpful comments. This research was sponsored in part by the National Science Foundation CIF-1911168, SaTC-1717950, and CAREER Award 1906169. 
The views and conclusions contained in this paper are those of the authors and should not be interpreted as representing any funding agencies. 
\end{acks}


\bibliographystyle{ACM-Reference-Format}
\bibliography{adv}

\appendix
 
\end{document}
\endinput